\newcommand{\bbE}{\mathbb{E}}
\newcommand{\bbR}{\mathbb{R}}
\newcommand{\cD}{\mathcal{D}}
\newcommand{\cP}{\mathcal{P}}
\newcommand{\lp}{\left(}
\newcommand{\rp}{\right)}
\definecolor{codegreen}{rgb}{0,0.3,0.6}
\definecolor{codegray}{rgb}{0.5,0.5,0.5}
\definecolor{codepurple}{rgb}{0.58,0,0.82}
\definecolor{backcolour}{rgb}{0.95,0.95,0.92}
\definecolor{orange}{rgb}{1,0.5,0}
\lstdefinestyle{mystyle}{
    basicstyle=\tiny,
    commentstyle=\color{codegreen},
    keywordstyle=\color{magenta},
    numberstyle=\tiny\color{codegray},
    stringstyle=\color{codepurple},
    basicstyle=\fontsize{8.5}{9}\selectfont\ttfamily,
    breakatwhitespace=false,         
    breaklines=true,                 
    captionpos=b,                    
    keepspaces=true,                 
    numbers=none,                    
    numbersep=5pt,                  
    showspaces=false,                
    showstringspaces=false,
}
\newcommand{\E}{\mathbb{E}}
\newcommand{\ba}[1]{\begin{align}#1\end{align}}
\newcommand{\cdotv}{\boldsymbol{\cdot}}
\newcommand{\distas}[1]{\mathbin{\overset{#1}{\kern\z@\sim}}}%
\newcommand{\kl}{\text{KL}}
\newcommand{\beqs}{\vspace{0mm}\begin{eqnarray}}
\newcommand{\eeqs}{\vspace{0mm}\end{eqnarray}}
\newcommand{\barr}{\begin{array}}
\newcommand{\earr}{\end{array}}
\newcommand{\given}{\mid}
\newcommand{\divbar}{~ \| ~}
\newtheorem{theorem}{Theorem}
\newtheorem{lemma}{Lemma}
\icmltitlerunning{Thinning and Thickening for Generative Modeling}
\begin{document}

\twocolumn[
\icmltitle{Learning to Jump: Thinning and Thickening Latent Counts\\ for Generative Modeling}

\icmlsetsymbol{equal}{*}

\begin{icmlauthorlist}
\icmlauthor{Tianqi Chen}{xxx}
\icmlauthor{Mingyuan Zhou}{xxx}
\end{icmlauthorlist}

\icmlaffiliation{xxx}{McCombs School of Business, The University of Texas at Austin}

\icmlcorrespondingauthor{Tianqi Chen}{tqch@utexas.edu}
\icmlcorrespondingauthor{Mingyuan Zhou}{mingyuan.zhou@mccombs.utexas.edu}

\icmlkeywords{Machine Learning, ICML}

\vskip 0.3in
]
\printAffiliationsAndNotice{\icmlEqualContribution}

\begin{abstract}
Learning to denoise has emerged as a prominent paradigm to design state-of-the-art deep generative models for natural images. How to use it to model the distributions of both continuous real-valued data and categorical data has been well studied in recently proposed diffusion models. However, it is found in this paper to have limited ability in modeling some other types of data, such as count and non-negative continuous data, that are often highly sparse, skewed, heavy-tailed, and/or overdispersed. To this end, we propose learning to jump as a general recipe for generative modeling of various types of data. Using a forward count thinning process to construct learning objectives to train a deep neural network, it employs a reverse count thickening process to iteratively refine its generation through that network. We demonstrate when learning to jump is expected to perform comparably to learning to denoise, and when it is expected to perform better. For example, learning to jump is recommended when the training data is non-negative and exhibits strong sparsity, skewness, heavy-tailedness, and/or heterogeneity.

\end{abstract}

\section{Introduction}

Learning how to generate realistic artificial data from random noise is a foundational problem in statistics and machine learning. A common practice to address this problem is to employ deep generative models (DGMs), which include 
variational auto-encoders (VAEs) \citep{kingma2013auto,rezende2014stochastic}, %
autoregressive models \citep{van2016pixel,van2017neural,ramesh2021zero,ramesh2022hierarchical}, and generative adversarial networks (GANs) \citep{goodfellow2014generative} as representative examples. 
These DGMs usually generate a random data sample by forward propagating  a random noise through a decoder, empowered by deep neural networks, or producing the elements of a sequence in an autoregressive manner, via the use of a recurrent neural network \cite{hochreiter1997long,graves2008offline} or a Transformer \citep{vaswani2017attention,radford2018improving}.

Different from previous generative modeling frameworks, learning to denoise, which generates a sample through iterative refinement,   has recently emerged as a prominent paradigm in designing DGMs  \citep{sohl2015deep,scorematching,ddpm,song2021scorebased}.   
In this paradigm, one first corrupts the clean data with noise at various signal-to-noise ratios (SNRs), and then learns how to iteratively denoise the noisy data, using the same deep neural network that is made aware of the corresponding SNR \citep{kingma2021variational}.
In general, DGMs built under this learning-to-denoise framework are shown to convincingly outperform previous ones in training stability, mode coverage, and generation quality \citep{Dhariwal2021DiffusionMB,rombach2022high,saharia2022photorealistic}. 

Commonly formulated as either denoising diffusion probabilistic models (DDPMs) \citep{sohl2015deep,ddpm}  or score-based generative models \citep{JMLR:v6:hyvarinen05a,vincent2011connection,scorematching,improvedscore}, 
learning-to-denoise-based DGMs have been successfully used to model high-dimensional distributions of both continuous real-valued data \citep{%
Dhariwal2021DiffusionMB,ho2022cascaded,ramesh2022hierarchical,rombach2022high,saharia2022photorealistic} and categorical data \citep{hoogeboom2021argmax,austin2021structured,gu2022vector,hu2022global}. Despite being relatively new, they have already  been deployed into a diverse set of applications, including personalized image editing \citep{ruiz2022dreambooth}, audio synthesis \citep{chen2021wavegrad,kong2021diffwave,yang2023diffsound}, text generation \citep{li2022diffusionlm}, uncertainty quantification in classification and regression \citep{han2022card}, learning expressive policies in reinforcement learning \citep{wang2023diffusionrl}, and generation of chemical and biological compounds \citep{shi2021learning,luo2022antigenspecific,jing2022torsional}, to name a few. 
While learning to denoise so far has been successfully applied to both continuous real-valued and categorical data,  non-trivial modifications to this framework are in general  required on a case-by-case basis to accommodate every new type of data, such as  count and sparse non-negative data. 
To be more specific, let's consider the case of modeling sparse data, which is prevalent in numerous real-world applications. Examples include users' ratings of movies, consumers' purchases of products, term-frequency vectors in documents, next-generation RNA-sequencing data, and graph adjacency matrices, among others. Gaussian-based denoising diffusion models, however, 
are known to be inherently restrictive in modeling %
exact sparsity, as will also be illustrated in our experiments.

\begin{figure*}[t]
    \centering
    \includegraphics[width=.95\linewidth]{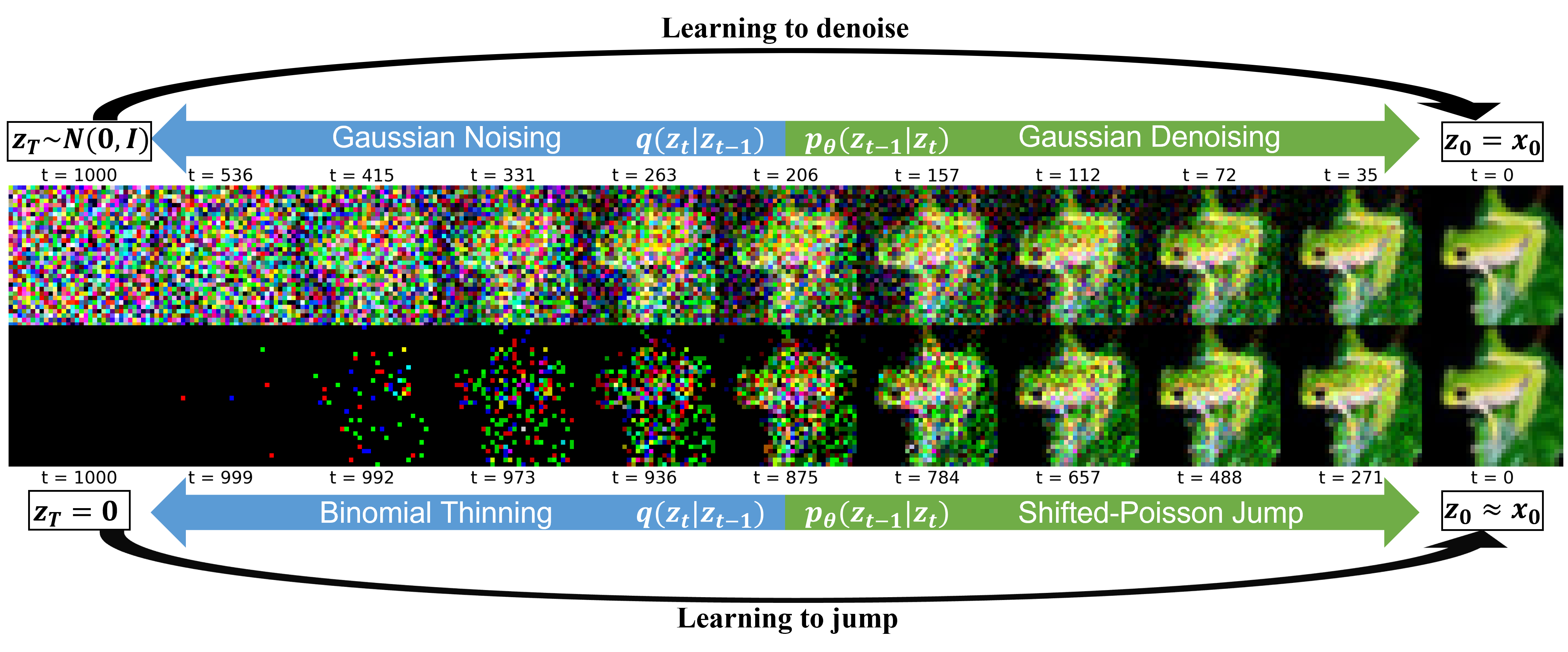}
    \caption{Illustrative comparison of learning to denoise, which adds noise for model training and denoise for data generation,  and learning to jump, which thins the counts  for model training and thickens the counts with Poisson jumps for data generation.}
    \label{fig:l2jvsl2d}
\end{figure*}

Distinct from learning to denoise, we propose learning to jump, as depicted in Figure~\ref{fig:l2jvsl2d}, that exploits the thinning and thickening of latent counts under the Poisson distribution to build a general  framework for deep generative modeling. We refer to the DGMs built under this new framework as JUMP models, %
which are suitable for any type of data that takes non-negative values, such as count, binary, %
and sparse non-negative continuous data. %
While existing DGMs  often start their generation from random noise and move to distinct states when the generation ends, the proposed thinning and thickening-based JUMP models start the data generation process from exact zeros and can stay at exact zeros when the generation ends.

JUMP models are implemented under a Bayesian framework equipped with a multi-stochastic-layer generative network, each layer of which shares the same deep neural network-based generator.  %
We first show any univariate non-negative observation can be recovered from a Poisson-distributed count according to the strong law of large numbers, and hence any univariate non-negative observation can be converted into a latent count with  a controllable accuracy to recover its original value.
We show in this latent count space, one can first thin the latent counts for model training and then iteratively thicken the latent counts with discrete non-negative jumps for data generation. 

More specifically, we define the decoder of a JUMP model via a Poisson distribution-based Markov chain, with the inference network defined via a forward Poisson thinning process. 
The Poisson Markov chain 
performs iterative refinement through a deep neural network. At each refinement step, the input consists of the cumulative count of all previous steps and the time embedding of the current step, while  the output is a shifted Poisson-distributed random count.
To train this deep neural network, 
we draw a Poisson-distributed count based on the observation and send it through a forward Poisson thinning process that gradually thins each count towards zero. Therefore, the number of non-zero locations during training (generation) is monotonically decreasing (increasing) as the number of diffusion (reverse-diffusion) steps increases.

The major contributions of the paper include: 1) Introducing learning to jump as a novel framework to build DGMs; 2) Revealing that learning to denoise has limited ability in modeling non-negative data that exhibit one or multiple features from the following list: sparsity, skewness, heavy-tailedness, and overdispersion; 3) Demonstrating the ability of learning-to-jump-based DGMs in modeling complex data.

\section{Learning to Jump}

The proposed jump diffusion probabilistic models, as depicted in Figure\,\ref{fig:l2jvsl2d}, can be roughly described as follows. First, we re-scale the data observations by a positive constant and then encode the re-scaled observations into Poisson-distributed latent counts. The applied encoding can be lossless and admits a simple approximate inverse. Second, we decrease the latent counts towards zeros through a series of binomial distribution-based thinning operations. Third, we reverse the thinning process via a count-thickening jump process, which increases the latent counts by a series of Poisson-distributed discrete jumps. We also note that if the data itself is already count-valued ($i.e.$, non-negative integers), then the initial operation of re-scaling and randomization via Poisson is not necessary. We defer the detailed discussion of that specific case into Appendix~\ref{sec:binomial}.
\subsection{Poisson-based Data Randomization}
Let us denote $\mathbb{N}_0:=\{0,1,2,\ldots\}$ as the set of non-negative integers and express the distribution of the observed  non-negative data~as $$x_0\sim \mathcal P_0\, .$$
 We first encode the observation $x_0$ into a count variable $z_0\in\mathbb{N}_0$ via the Poisson distribution, whose rate is defined by the re-scaled observation $\lambda x_0$ where $\lambda > 0$: %
$$
z_0\sim \mbox{Pois}(\lambda x_0)\, .
$$
The resulting latent count distribution can be viewed as a mixed Poisson distribution~\cite{karlis2005mixed} of which the mixing distribution is the data distribution and is identifiable\footnote{The term ``identifiable'' means being able to identify the mixing distribution of a mixture.}. 

For $x_0>0$, the standard-deviation-to-mean ratio monotonically decreases towards zero as $\lambda$ increases to infinity: 
$$
\frac{\mbox{std}(z_0)}{\E(z_0)} = \frac{1}{(\sqrt{\lambda  x_0})} \to 0\, .
$$
Moreover, when $\cP_0$ is a Dirac distribution $\delta(x_0)$ and $\lambda$ is sufficiently large, approximately we have 
$$
\frac{z_0}{\lambda }\sim \mathcal{N}(x_0,\sigma^2),~ \sigma := \sqrt{\frac{x_0}{\lambda }}\, .
$$
When $\lambda$ approaches infinity,
according to the strong law of large numbers, we have %
$$
\lim_{\lambda\rightarrow\infty} \frac{z_0}{\lambda } = x_0
$$
with probability one. 
Therefore, as $\lambda$ increases, $\frac{z_0}{\lambda}$ tends to provide a more and more accurate approximation to $x_0$. Next, we will show that the same is true for general data distributions under certain regularity conditions.

\begin{theorem}
    Suppose $z_0\sim \emph{\mbox{Pois}}(\lambda x_0)$, $x_0\sim \mathcal{P}_0$, and the moment generating function $\bbE_{x_0}[e^{tx_0}]$ exists for $|t|<h$, where $h>0$, then random variable $$\hat{x}_0=z_0/\lambda$$ converges in distribution to $\mathcal{P}_0$ as $\lambda$ goes to infinity.
\end{theorem}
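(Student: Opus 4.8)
The plan is to pass to moment generating functions and apply a continuity theorem. Writing $M_\lambda$ for the MGF of $\hat x_0 = z_0/\lambda$, the tower rule together with the Poisson MGF gives, for every real $t$,
\[
M_\lambda(t) = \bbE_{x_0}\!\left[\bbE\!\left[e^{(t/\lambda) z_0}\mid x_0\right]\right] = \bbE_{x_0}\!\left[\exp\!\left(\lambda x_0\,(e^{t/\lambda}-1)\right)\right].
\]
The candidate limit is the MGF of $\mathcal{P}_0$, i.e.\ $M(t) = \bbE_{x_0}[e^{t x_0}]$, which by assumption is finite on $|t| < h$. Since $\lambda(e^{t/\lambda}-1)\to t$ as $\lambda\to\infty$, the integrand converges pointwise in $x_0$ to $e^{t x_0}$, and the crux of the argument is to justify exchanging limit and expectation.

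For that exchange I would treat the two signs of $t$ separately, exploiting $x_0\ge 0$. When $t\le 0$ we have $e^{t/\lambda}-1\le 0$, hence $\exp(\lambda x_0(e^{t/\lambda}-1))\le 1$ and dominated convergence applies with the constant bound $1$ (integrable since $\mathcal{P}_0$ is a probability measure). When $0 < t < h/2$, the elementary inequality $e^{u}-1\le u + u^2$ for $0<u\le 1$ yields $\lambda(e^{t/\lambda}-1)\le 2t$ once $\lambda\ge t$, so $\exp(\lambda x_0(e^{t/\lambda}-1))\le e^{2t x_0}$; the dominating function $e^{2t x_0}$ is $\mathcal{P}_0$-integrable exactly because $2t < h$. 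This is the one place the moment-generating-function hypothesis is genuinely needed. Consequently $M_\lambda(t)\to M(t)$ for all $t$ in an open interval about the origin, and the same bounds show the $M_\lambda$ are themselves finite on a fixed such interval once $\lambda$ is large.

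To conclude, I would invoke the MGF form of the continuity theorem (Curtiss' theorem): if $M_\lambda\to M$ pointwise on an open interval containing $0$, with the $M_\lambda$ eventually finite there and $M$ the MGF of a genuine probability law (here $\mathcal{P}_0$), then $\hat x_0$ converges in distribution to $\mathcal{P}_0$.

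The argument is essentially routine; the only delicate point is obtaining a dominating function uniform in $x_0$ for positive $t$, which is precisely what the finite-MGF assumption delivers. As a remark, the moment hypothesis can be dropped altogether if one argues with characteristic functions instead: $\bbE[e^{it z_0/\lambda}] = \bbE_{x_0}[\exp(\lambda x_0(e^{it/\lambda}-1))]$, whose integrand has modulus $\exp(\lambda x_0(\cos(t/\lambda)-1))\le 1$, so bounded convergence gives convergence to $\bbE_{x_0}[e^{it x_0}]$ for every $t\in\bbR$, and Lévy's continuity theorem finishes the proof. The MGF phrasing is presumably preferred because the same computation also controls the moments of $\hat x_0$, consistent with the ``controllable accuracy'' remarks preceding the statement.
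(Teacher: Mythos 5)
Your proof is correct and follows essentially the same route as the paper: compute the MGF of $\hat{x}_0$ via the tower rule and the Poisson MGF, pass to the limit using $\lambda(e^{t/\lambda}-1)\to t$, and conclude by the continuity theorem for MGFs. The paper simply asserts the interchange of limit and expectation, whereas you justify it by dominated convergence (splitting on the sign of $t$ and using the finite-MGF hypothesis for $t>0$) and explicitly invoke Curtiss' theorem, so your write-up fills in the steps the paper leaves implicit.
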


\begin{proof}
    The moment-generating function of $\hat{x}_0=z_0/\lambda$ can be expressed as
    \ba{
   \E[e^{t \frac{z_0}{\lambda}}] &= \E_{x_0}\E_{z_0\sim \text{Pois}(
\lambda x_0)}   [e^{t \frac{z_0}{\lambda}}]\notag\\
&=\E_{x_0}[\exp( \lambda x_0 (e^{\frac{t}{\lambda}}-1))].
    }
    Since $\lim_{\lambda\rightarrow\infty}  \lambda ( e^{\frac{t}{\lambda}}-1) = t ,$ we have 
    $$
    \lim_{\lambda\rightarrow\infty} \E[e^{t \frac{z_0}{\lambda}}]  = \E_{x_0}[e^{t x_0}],
    $$
    where the right-hand side is the moment generating function of $x_0\sim \mathcal{P}_0$. 
    
\end{proof}

Therefore, we will focus on modeling the distribution of latent count $z_0\in \mathbb{N}_0$, from which we can recover $x_0$ as $$\hat x_0 = g\left(\frac{z_0}{\lambda }\right),$$ where $g(y)$ is a function that maps its input $y$ into the domain of $x_0$ and the scaling parameter $\lambda$  controls the mapping accuracy.  For example, if $x_0\in\bbR_{\ge0}$, then $g(y)=y$; if $x_0\in\mathbb{N}_0$ is a count variable, then $g(y)=\text{round}(y)$, which rounds its input $y$ to the nearest integer; if $x_0\in\{0,1\}$ is a binary variable, then $g(y) = \mathbf{1}(y> 0)$; if $x_0$ is a categorical variable represented as a one-hot vector, then $g(y)$ outputs a one-hot vector whose non-zero location is at the dimension that $y$ takes its maximum value; and if $x_0$ is a simplex-constrained probability vector, then $g(y) = y/\|y\|_1$. 

In practical scenarios, the scaling parameter $\lambda$ does not necessarily need to be extremely large in order to achieve a good approximation. For instance, considering a distribution with a mean of 0.5, using a scaling parameter of $10$ or $100$ would result in a signal-to-noise ratio of $5$ or $50$, as measured by the following expression: $$\mathbb{E}_{x_0}\left[\frac{\left(\mathbb{E}[z_0\mid x_0]\right)^2}{\text{Var}(z_0\mid x_0)}\right]=\lambda\mathbb{E}[x_0].$$

\subsection{Thinning Can be Reversed with Shifted Poisson}
A well-known property of the Poisson distribution is that using  a binomial distribution, one can thin a Poisson random variable  into another Poisson random variable with a lower rate ($e.g.$, see page 163 of \citet{casella2021statistical}). Specifically, thinning $$p(x) = \mbox{Pois}(x;\lambda)$$ via $$p(y\given x)=\mbox{Binomial}(y;x,\pi),$$ where $\pi\in[0,1]$, leads to $$p(y) = \E_{x\sim p(x)}[p(y\given x)] = \mbox{Pois}(y;\pi\lambda).$$ Denote $x\sim \mbox{Pois}_{m}(\lambda)$ as a shifted Poisson distribution with probability mass function $$\mbox{Pr}(x=k) =\frac{\lambda^{k-m}e^{-\lambda}}{(k-m)!}, ~~k\in\{m,m+1,\ldots\}.$$ To reverse from $p(y)$ to $p(x)$, we show what is needed is a shifted Poisson distribution as
$$
p(x\given y)=\mbox{Pois}_{ y}(x;(1-\pi) \lambda),
$$
which is the same as adding a random discrete jump of $\mbox{Pois}((1-\pi) \lambda)$ into $y$ to generate $x$. In other words, we have the following Lemma:
\begin{lemma}\label{lem:1}
The Poisson-binomial bivariate count distribution and shifted-Poisson Poisson bivariate count distribution shown below are equivalent to each other:
\ba{
&p(x,y\given \lambda,\pi) = \emph{\mbox{Binomial}}(y;x,\pi) \emph{\mbox{Pois}}(x;\lambda)\notag\\
&~~~~~~~~~~=\emph{\mbox{Shifted-Pois}}_{y}(x;(1-\pi)\lambda) \emph{\mbox{Pois}}(y;\pi\lambda).\label{eq:b_count}
}\label{lemma}
\end{lemma}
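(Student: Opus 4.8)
The plan is to prove the identity by a direct computation of the two joint probability mass functions, since each side of \eqref{eq:b_count} is an explicit product of elementary factors. First I would pin down the common support: the left-hand side $\mbox{Binomial}(y;x,\pi)\mbox{Pois}(x;\lambda)$ is supported on pairs of non-negative integers with $0\le y\le x$ (the binomial coefficient $\binom{x}{y}$ vanishes otherwise), and the right-hand side $\mbox{Shifted-Pois}_{y}(x;(1-\pi)\lambda)\,\mbox{Pois}(y;\pi\lambda)$ is supported on pairs with $x\ge y\ge 0$ by the definition of the shifted Poisson pmf given just above. So the two supports coincide, and it remains to check that the masses agree on that set.

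On that set I would expand the left-hand side and cancel the $x!$ coming from the Poisson pmf against the $x!$ in the binomial coefficient:
\ba{
\binom{x}{y}\pi^y(1-\pi)^{x-y}\cdot\frac{\lambda^x e^{-\lambda}}{x!} = \frac{\pi^y (1-\pi)^{x-y}\lambda^x e^{-\lambda}}{y!\,(x-y)!}.
}
Then I would expand the right-hand side as $\frac{((1-\pi)\lambda)^{x-y}e^{-(1-\pi)\lambda}}{(x-y)!}\cdot\frac{(\pi\lambda)^y e^{-\pi\lambda}}{y!}$, collect the powers of $\lambda$ into $\lambda^{(x-y)+y}=\lambda^x$ and the exponentials into $e^{-(1-\pi)\lambda-\pi\lambda}=e^{-\lambda}$, which reproduces exactly the expression above. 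This gives pointwise equality of the two joint pmfs, hence the lemma.

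An alternative route, perhaps more in the spirit of the surrounding text, is to start from the thinning property already quoted: marginalizing the left-hand side over $x$ yields $p(y)=\mbox{Pois}(y;\pi\lambda)$, so Bayes' rule gives $p(x\mid y)=p(x,y)/p(y)$, and simplifying this ratio (again a one-line factorial cancellation) produces $\mbox{Shifted-Pois}_{y}(x;(1-\pi)\lambda)$; multiplying back by $p(y)$ recovers the claimed factorization. Either way the argument is elementary, so I do not anticipate a genuine obstacle. The only place that warrants a moment's care is the bookkeeping of the support constraint $x\ge y$, which is needed for the shifted Poisson to be well defined and for the binomial coefficient to be nonzero; this is why I would dispatch the support check first before touching the algebra. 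The substantive content of the lemma is simply that the binomial thinning of a Poisson admits an exact reverse conditional, and the computation above is what makes that concrete.
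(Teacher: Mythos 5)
Your proof is correct and follows exactly the route the paper indicates: the paper states that the lemma is proved ``by computing the probability mass functions of these two bivariate count distributions and showing they are equivalent,'' which is precisely your direct calculation (support check plus the $x!$ cancellation). You have simply written out the details the paper leaves implicit, and your computation is accurate.
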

The proof is straightforward by computing the probability mass functions of these two bivariate count distributions and showing they are equivalent to each other.  We also note that Lemma~\ref{lem:1} presented here can be considered as a specific instance of Lemma 4.1 in \citet{zhou2012beta}. The aforementioned lemma shows the equivalence between two scenarios: 1) drawing a total Poisson-distributed random count and allocating it among $K$ distinct categories via a multinomial distribution, and 2) independently drawing $K$ Poisson-distributed random counts. In the context of learning to jump, Lemma~\ref{lem:1} serves as a restatement of that lemma, tailored to the framework and constraints of this specific problem, with the particular configuration of $K=2$ representing the number of categories at each jump step.

\subsection{Count-thinning-based Encoder}
Starting by drawing a latent count
$z_0\sim q(z_0\given x_0) =  \mbox{Pois}(z_0;\lambda x_0)
$ and then gradually thinning it towards zero, we define a forward thinning process of $T$ time steps as
\ba{
q(z_{1:T}\given z_0)  &=\textstyle \prod_{t=1}^T q(z_{t}\given z_{t-1})
\notag\\
&
= \textstyle\prod_{t=1}^T \mbox{Binomial}\left(z_t;z_{t-1},{\frac{\alpha_t}{\alpha_{t-1}}}\right),
}
where $\{\alpha_t\}_{0,T}$ is the set of  thinning coefficients satisfying $$1=\alpha_0>\alpha_1>\ldots > \alpha_T \rightarrow 0.$$ By construction,  we have $$q(z_t\given z_0)=\mbox{Binomial}\left(z_t;x_{0},\alpha_t\right).$$ As $\alpha_T \rightarrow 0$, we have $$q(z_T\given z_0)=\mbox{Binomial}\left(z_T;x_{0},\alpha_T\right) \rightarrow  \delta_0(z_T),$$ which becomes a point mass at $z_T=0$. In other words, the Poisson thinning process possesses an absorbing state precisely at zero.

A key property of the forward thinning process, as suggested by Lemma~\ref{lemma}, is that the marginal distribution of the latent count at any $t$  remains to follow a Poisson distribution as
\ba{
q(z_t\given x_0) &= \E_{z_{0}\sim \text{Pois}(\lambda x_0)}[ q(z_t\given z_0)]\notag\\ &= \mbox{Pois}(z_t;\lambda\alpha_t x_0).\label{eq:marginal}
}
With \eqref{eq:b_count} and \eqref{eq:marginal}, we can show another key property of the thinning process:
The thinning from $z_{t-1}$ to $z_t$ can be reversed by a conditional posterior following the shifted-Poisson distribution  as 
\ba{
&q(z_{t-1} \given z_t,x_0)\notag\\
& = \frac{\mbox{Binomial}\left(z_t;z_{t-1},{\frac{\alpha_t}{\alpha_{t-1}}}\right) \mbox{Pois}(z_{t-1};\lambda\alpha_{t-1} x_0)}{ \mbox{Pois}(z_{t};\lambda\alpha_t x_0)}\notag\\
&=\mbox{Pois}_{ z_t}(  z_{t-1};\lambda (\alpha_{t-1}-\alpha_t) x_0),
}
which is key 
to deriving a tractable variational lower bound. %

\subsection{Count-thickening-based Decoder}

Let us denote $$f_{\theta}(z_{t},t)\ge 0$$ as a non-negative nonlinear function whose input consists of the count at time step $t$ and the time embedding for $t$. We start the count-thickening process at $z_t=0$, and at time step $t-1$, we add a jump of $$\mbox{Pois}(\lambda (\alpha_{t-1}-\alpha_t)f_{\theta}(z_{t},t))$$ into the previous accumulative count $z_{t}$ to arrive at the updated accumulative count $z_{t-1}$ at time $t-1$. More specifically, starting at $z_T\sim \mbox{Pois}(0)$, which means $z_T=0$ almost surely, 
we define a decoder with $T$ count-thickening steps, each of which corresponds to a shifted Poisson distribution, expressed as
\ba{
&p(z_{0:T-1}\given z_T=0) = \prod_{t=1}^T p_{\theta}(z_{t-1}\given z_t) %
\notag\\
&~~~~~ = \prod_{t=1}^T q(z_{t-1}\given z_t,\hat x_0 = f_{\theta}(z_{t},t)) %
\notag\\
&~~~~~ = %
\prod_{t=1}^T \mbox{Pois}_{z_{t}}(z_{t-1};\lambda(\alpha_{t-1}-\alpha_t) f_{\theta}(z_{t},t)).
\label{eq:decoder}
}

\subsection{Auto-encoding Variational Inference}
Below we provide the key steps of variational inference.
While it is intractable to compute the marginal distribution $p(x_0) = \E_{p_{\theta}(z_{0:T})}[p(x_0\given z_{0:T})]$, similar to the optimization in DDPMs \citep{sohl2015deep,ddpm}, we can minimize a negative evidence lower bound (ELBO) as %
\ba{
L &= -\E_{x_0\sim \mathcal P_0}\E_{q(z_{0:T}\given x_0)}\left[\ln \frac{p_{\theta}(z_{0:T}, x_0)}{q(z_{0:T}\given x_0)}\right] \notag\\
&=\E_{x_0\sim \mathcal P_0} \left[L_{-1} + \sum_{t=1}^{T}L_{t-1} + L_T\right],\label{eq:neg-elbo-loss}
}
where
\ba{
 L_{t-1} &=\E_{q(z_{t}\given x_0)}[\mbox{KL}(q(z_{t-1}\given z_t,x_0)\divbar p_{\theta}(z_{t-1}\given z_t) )]
 \label{eq:L_t_1}
 }
 for $1\le t\le T$ and
\ba{
 L_{-1}&=\bbE_{q(z_0\given x_0)}[-\ln p_{\theta}(x_0\given z_0)]
\\
 L_T&=\E_{q(z_0\given x_0)}[\mbox{KL}(q(z_T\given z_0)\divbar p(z_T))].
}
As $L_T\approx 0$, it can be ignored. In practice, we do not consider $L_{-1}$ since $\bbE_{x_0}[L_{-1}]$ minimizes when $p_{\theta}(x_0\given z_0)=q(x_0\given z_0)$, which can be well approximated by a deterministic mapping $g(\frac{z_0}{\lambda})$ when the scaling parameter $\lambda$ is sufficiently large.

The Kullback--Leibler (KL) divergence term in \eqref{eq:L_t_1} has an analytic expression as
\ba{
&\mbox{KL}\lp q(z_{t-1}\given z_t,x_0)\divbar p(z_{t-1}\given z_t)\rp\notag\\
&=\lambda
 (\alpha_{t-1}-\alpha_t)\notag\\
 &~~~~~~~~~~\times [ x_0(\ln x_0-\ln f_{\theta}(z_t,t)) - (x_0-f_{\theta}(z_t,t))]\notag\\
& = \lambda
 (\alpha_{t-1}-\alpha_t) D_{\varphi}(x_0, f_\theta(z_t,t)),\notag
}
where $D_{\varphi}(p, q)$ is the relative entropy, which is a Bregman divergence \citep{banerjee2005clustering} induced by the differentiable, strictly convex function $\varphi(x) = x\ln(x)$ such that
\ba{
D_{\varphi}(p,q) &= \varphi(p)-\varphi(q)-(p-q)^T\nabla_q \varphi(q) \notag\\
&= p( \ln{p}-\ln{q})-(p-q).\label{eq:RE}
}
We divide $ L_{t-1}$ by ${\lambda(\alpha_{t-1}-\alpha_t)}$ to define a re-weighted negative ELBO as

\ba{
\tilde L&=%
 \E_{x_0\sim \mathcal P_0} \left[\sum_{t=1}^T \tilde L (x_0,t)\right]
\notag\\
 \tilde L (x_0,t)& = \E_{z_0\sim q(z_0\given x_0)} \E_{q(z_{t}\given z_0)}[D_{\varphi}(x_0, f_\theta(z_t,t))].
\label{eq:loss}
}

As $D_{\varphi}(p,q)\ge 0$, we have $\tilde L\ge 0$ and hence we can monitor how close $\tilde L$ is to zero to assess convergence. Since $\E_{x_0\sim \mathcal P_0}[\tilde L (x_0,t) ]$ can also be written as $$ \E_{z_t\sim q(z_{t})}\E_{x_0\sim q(x_0\given z_t)}[D_{\varphi}(x_0, f_\theta(z_t,t))],$$ using the property of the Bregman divergence \citep{banerjee2005clustering}, we also know that the loss is minimized at $\theta^*$ when $f_{\theta^*}(z_t, t)=\bbE[x_0\given z_t, t]$ for all $z_t\sim q(z_t)$.

Our algorithm is straightforward to implement:

\noindent 1) Sample $t\in \{1,\ldots,T\}$ uniformly at random  and draw 
\ba{
z_t\sim \mbox{Pois}(\lambda \alpha_t x_0),~x_0\sim \mathcal P_0(x_0); %
\notag
}
2) Optimize $\theta$ with gradient 
\ba{{\small\nabla_{\theta} D_{\varphi}(x_0, f_\theta(z_t,t)) = \nabla_{\theta}[ f_{\theta}(z_t,t)-x_0 \ln f_{\theta}(z_t,t) ].}
\notag
}

Note that \eqref{eq:loss} can be readily extended to continuous time as
$$
\int_0^1\E_{x_0\sim \mathcal P_0}\E_{q(z_{t}\given x_0)}[D_{\varphi}(x_0, f_\theta(z_t,t))]dt,
$$
where $q(z_t\given x_0)=\mbox{Pois}(z_t; \lambda\alpha_tx_0)$ and $\alpha_t=\alpha(t)$ is a monotonically-decreasing function defined on $[0, 1]$ such that $\alpha(0)=1$, $\alpha(1)\approx 0$, and $1>\alpha_s>\alpha_t>0$ for $0<s<t<1$.

For data generation, we let $z_T=0$, perform ancestral sampling via \eqref{eq:decoder} to draw $z_0$, and then let either $\hat{x}_0 =g(z_0/\lambda)$ or $\hat{x}_0 =f_{\theta}(z_1,1)$ to produce a random generation. We summarize the training and sampling algorithms in Algorithms~\ref{algo:train} and \ref{algo:sample}, respectively.

\begin{algorithm}[t]
\caption{Training\label{algo:train}}
\begin{algorithmic}[1]
    \REQUIRE Dataset $\cD$, Mini-batch size $B$, Scaling parameter $\lambda$, timesteps $T$, thinning coefficients $\{\alpha_t\}_{t=1}^T$, and decoder deep network $f_\theta$
    \REPEAT
    \STATE Draw a mini-batch $X_0=\{x_0^{(i)}\}_{i=1}^B$ from $\cD$
    \FOR{$i=1\ \text{to}\ B$}
        \STATE $t_i\sim\mbox{Uniform}(\{1,\ldots,T\})$
        \STATE $z_t^{(i)}\sim\mbox{Poisson}(\lambda\alpha_{t_i}x_0^{(i)})$

    \STATE Compute loss $\mathcal L_i=D_\varphi(x_0^{(i)}, f(z_t^{(i)}, t_i))$
    \ENDFOR
    \STATE Perform SGD with $\frac{1}{B}\nabla_{\theta}\sum_{i=1}^B \mathcal L_i
 $
    \UNTIL{converge}
\end{algorithmic}
\end{algorithm}

\begin{algorithm}[t]
\caption{Sampling\label{algo:sample}}
\begin{algorithmic}[1]
    \REQUIRE Scaling parameter $\lambda$, timesteps $T$, thinning coefficients $\{\alpha_t\}_{t=1}^T$, decoder deep network $f_\theta$ and constraint function $g$
    \STATE Initialize a mini-batch $z_T=\mathbf{0}$
    \STATE $z_t\gets z_T$
    \FOR{$t=T\ \text{to}\ 1$}
        \STATE $\hat{x}_0 \gets f_\theta(z_t, t)$
        \STATE $z_{t-1}\sim z_t + \mbox{Poisson}(\lambda(\alpha_{t-1}-\alpha_t)\hat{x}_0)$
    \ENDFOR
    \STATE $x_0\gets g(z_0/\lambda)$, or $x_0=g(\hat x_0)$
    \STATE \textbf{return} $x_0$
\end{algorithmic}
\end{algorithm}

\section{Related Work} %
The proposed learning to jump provides a new framework to construct DGMs. Below we discuss several representative DGMs and how JUMP models differ from them.

\textbf{VAEs and GANs.} ~
Both VAEs and GANs utilize deep neural networks in their data-generating process. They typically forward propagate a random noise once through an encoder, parameterized by a deep neural network, to generate a random data sample. This is different from learning to jump, which needs to iterate its generation through the same deep neural network multiple times before producing a single realistic sample in the original data space.    VAEs \citep{kingma2013auto,rezende2014stochastic}  are learned by maximizing the lower bound of an intractable log marginal likelihood,
whereas GANs \citep{goodfellow2014generative} are learned under a min-max adversarial game between a discriminator and a generator. When applied to image generation, VAEs are known to generate blurry images while GANs are known to be susceptible to training instability and dropping data density modes.

A wide variety of techniques have been developed over the years to improve their performance.  For VAEs, a considerable amount of effort has been spent on  %
constructing more expressive tractable variational posteriors \citep{ranganath2016hierarchical,huszar2017variational,yin2018semi,zhang2018advances,molchanov2019doubly,titsias2019unbiased} and %
improving the decoder architecture to generate more photo-realistic images \citep{razavi2019generating,maaloe2019biva,vahdat2020nvae}. Whereas for GANs, steady progress has been made on improving training stability \citep{arjovsky2017wasserstein,gulrajani2017improved,miyato2018spectral,mescheder2018training}, generation fidelity \citep{radford2015unsupervised,brock2018large,karras2019style,sauer2022styleganxl}, and mode coverage and data efficiency \citep{zhao2020differentiable,karras2020training,yang2021data,wang2022diffusion}.

\textbf{Learning to denoise.}
~Both score-based generative models \citep{scorematching} and DDPMs \citep{ddpm} can  be considered as representative DGMs developed under the learning-to-denoise framework, which generates a random sample by iteratively refining its generation through a deep neural network. From the Bayesian perspective, the method of learning to denoise can be implemented under an auto-encoding variational inference framework. Defining data generation via a multi-stochastic-layer generative network %
and introducing a fixed hierarchical variational encoder for inference, 
one may construct an ELBO of the log marginal likelihood to derive both the training and inference algorithms \citep{sohl2015deep,ddpm}.

Specifically, as in DDPMs, one may take a Markov diffusion chain as the encoder, which gradually corrupts the data towards pure Gaussian noise by repeatedly mixing it with Gaussian noise at various scales. This provides the training data to supervise the learning of a reverse Markov diffusion chain. After being trained, this reverse chain iterates through the same deep neural network, whose inputs include the time embedding of the current diffusion step, to gradually refine a Gaussian noise into a noise-free data generation. 

We note that DDPMs can also be formulated as either performing denoising score matching in a discrete-time setting or solving stochastic differential equations in a continuous-time setting  \citep{song2021scorebased}.
While the development of the JUMP models under the learning-to-jump framework mimics that of DDPMs under the learning-to-denoise framework, the JUMP models can be viewed as neither score matching nor stochastic differential equations. This is because the generations of JUMP models take count values   that are not continuous. 

Learning to denoise has also been generalized to model categorical data, where the noise corruption corresponds to randomly transiting a categorical observation to some other category under a pre-defined transition probability matrix \citep{hoogeboom2021argmax,austin2021structured}. Inspired by the success of masked language models in natural language processing \citep{devlin2018bert}, one may further augment the existing categories with a mask category whose self-transition probability is one \citep{austin2021structured}. In other words, the mask category is an absorbing category. Consequently, unmasking becomes an essential part of the denoising process for data generation. When combined with an appropriate pretrained encoder-decoder with a tokenized discrete latent space, such as that provided by VQ-VAE \citep{van2017neural} or VQ-GAN \citep{esser2021taming}, learning to unmask and/or denoise in this discrete space has led to strong performance in both unconditional and text-conditioned image generation \citep{gu2022vector,hu2022global,chang2022maskgit,chang2023muse}. Related to learning to unmask which has the mask category as its unique absorbing state, the proposed learning to jump also has a unique absorbing state, which is 0. A distinction is that the mask category is a nonexistent fictitious category, while 0 is a possible true data value in learning to jump.

\textbf{Learning to reverse other types of corruptions.}
~Several recent works have all tried to come up with a method that mimics learning to denoise but changes the data corruption from adding Gaussian noise to another type of corruption. Representative examples include learning to de-blur \citep{hoogeboom2023blurring}, learning to reverse heat dissipation \citep{rissanen2023generative}, and learning to reverse arbitrary and even noiseless/cold image transforms \citep{bansal2022cold, daras2022soft}. These variations of learning to denoise ultimately all boil down to minimizing an $L_2$ loss between the clean data and the predicted reconstruction of the corrupted version of the data. From this perspective, learning to jump differs from all of them in having a relative entropy-based loss, as shown in \eqref{eq:loss}, that is different from an $L_2$ loss. Another notable difference is that the starting point of the reverse chain in learning to jump is a point mass at 0, rather than some random noise from a fixed prior distribution.

\section{Experiments}

To demonstrate the power and versatility of the proposed learning-to-jump framework for generative modeling, we evaluate JUMP models on a diverse set of non-negative data, ranging from univariate non-negative data of various types, document term-frequency count vectors and TF-IDF vectors obtained from two representative text corpora, to natural images whose pixel values lie between 0 and 255. For data such as natural images that typically exhibit no strong sparsity, skewness, heavy-tailedness, or overdispersion, learning-to-denoise-based DGMs, such as DDPMs, have already been proven to perform well. In this case, we don't expect JUMP models to provide unique advantages in faithfully regenerating the original data distribution. However, when the data is highly sparse, skewed, heavy-tailed, and/or overdispersed, we expect the proposed JUMP models to behave differently, potentially providing distinct advantages in data regeneration, as confirmed by a rich set of experiments shown below. Our code is available at \url{https://github.com/tqch/poisson-jump}.

\subsection{Univariate Non-negative Data}
\label{sec:synthetic-1}

\begin{figure*}[th]
    \centering
    \includegraphics[%
    width=.9\textwidth]{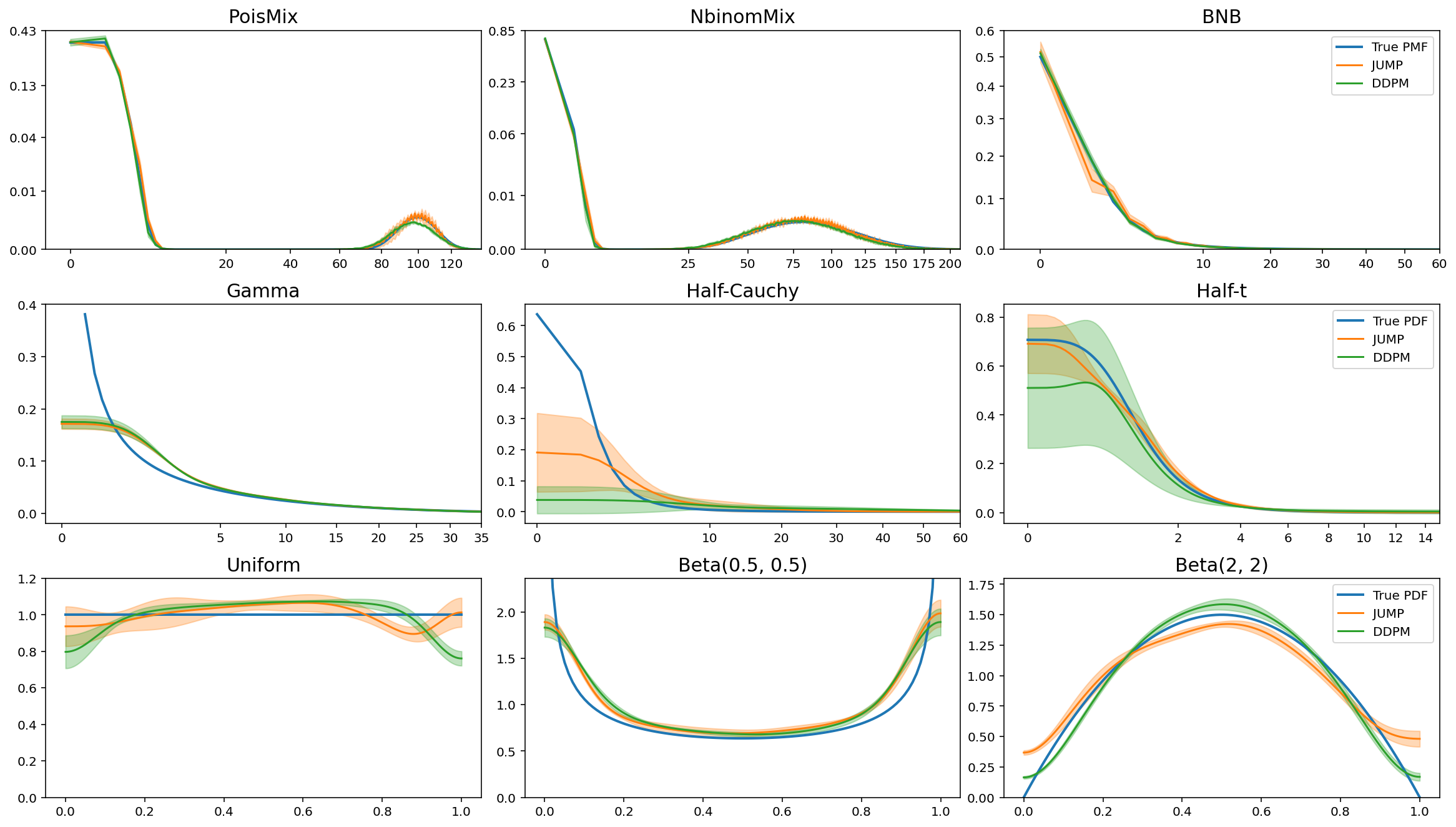}
    \caption{Visual comparison of the true and regenerated probability distributions by both DDPM and JUMP, across 9 univariate synthetic datasets described in Section~\ref{sec:synthetic-1}.}%
    \label{fig:univariate_synthetic}
\end{figure*}

We consider three types of univariate non-negative data $x$, including $x\in\mathbb{N}_0$, $x\in[0,\infty) $, and $x\in [0,1]$.
For $x\in\mathbb{N}_0$, we synthesize three count datasets, which are characteristic of sparsity, skewness, and overdispersion, from a bi-modal Poisson mixture~(PoisMix)
$$
0.9 \cdot \mbox{Pois}(1) + 0.1 \cdot \mbox{Pois}(100),
$$
a bi-modal negative binomial (Nbinom) mixture~(NbinomMix)
$$
0.75 \cdot \mbox{Nbinom}(1, 0.9) + 0.25 \cdot \mbox{Nbinom}(10, 0.1),
$$
and a beta-negative-binomial~(BNB) distribution with probability mass
$$
P(k)=\int_0^1\mbox{Nbinom}(k; 1, p)\mbox{Beta}(p; 1.5, 1.5)d p,~k\in \mathbb{N}_0 \, . %
$$
We also experiment on continuous non-negative data from either skewed or heavy-tailed distributions, including $\mbox{Gamma}(0.5, 0.05)$, a half-Cauchy with probability density
$$
p(x)\propto (1+x^2)^{-1},~x\ge 0,
$$ and a half-t distribution with probability density
$$
p(x)\propto\lp1+\frac{x^2}{2}\rp^{-\frac{3}{2}},~x\ge0.
$$
We further consider $x\in [0,1]$ drawn from $\mbox{Uniform}(0,1)$, $\mbox{Beta}(0.5, 0.5)$, or $\mbox{Beta}(2, 2)$, which correspond to flat, convex, and concave shaped probability density functions, respectively, that are all symmetric around the mean $x=0.5$.

We draw 100,000 random samples from each distribution to form the  training data. All the evaluated models in the experiment use the same 3-layer MLP architecture with 128 hidden units and leaky-ReLU activation. For D3PMs \cite{austin2021structured}, we truncate the distributions at their $0.999$ quantiles to ensure the number of states is finite. For the proposed JUMP, we set $\lambda$ to 10 for all the univariate datasets except for $\mbox{Uniform}$, $\mbox{Beta}(0.5, 0.5)$ and $\mbox{Beta}(2, 2)$, where we use $\lambda=100$.

We report the mean and standard deviation of the Wasserstein-1 distances between the
true/empirical distribution of the training data and the empirical distribution of 100,000 generated samples over 5 independent runs. 
We compute Wasserstein-1 \citep{COTFNT} as
$$
W_1(p, q)\triangleq \int_{\bbR\times\bbR} |x-y|d\pi(x, y) = \int_{\bbR}|P-Q|(x)dx,
$$
where $p, q$ are arbitrary univariate distributions, $\pi(x, y)$ is their coupling, and $P, Q$ are cumulative distribution functions of $p, q$ respectively. When $p, q$ are two empirical distributions with the same size $n$, it reduces to compute $\|\text{sort}(X)-\text{sort}(Y)\|_1/n$, where $X,Y$ are $n$-dimensional data vectors of $p, q$.

\begin{table}[t]
\caption{
Wasserstein-1 distances between the empirical discrete distribution of the generated random samples and \textbf{Top:} the true training distribution, \textbf{Middle \& Bottom:} the empirical distribution of the training set. The unit in the Bottom table is $10^{-2}$.}
\label{tab:univariate_synthetic}

\begin{adjustbox}{width=\columnwidth,center}
\begin{tabular}{c|ccc}
    \toprule
    &\multicolumn{3}{c}{Discrete $x\in \mathbb{N}_0$}\\
    \cmidrule(lr){2-4}
    & PoisMix & NbinomMix &  BNB \\
    \midrule
    DDPM~\cite{ddpm} & $1.48\pm 0.37$ & $2.17\pm 0.77$ & $4.49\pm 4.41$\\
    D3PM Unif~\cite{austin2021structured} & $23.53\pm 0.48$ & $22.31\pm 0.09$ & $2.53\pm 0.02$\\
    D3PM Gauss~\cite{austin2021structured} & $17.29\pm 1.41$ & $19.57\pm 0.67$ & $2.59\pm 0.02$\\
    JUMP (ours) & $\mathbf{0.85\pm 0.41}$ & $\mathbf{1.84\pm 0.44}$ & $\mathbf{1.11\pm 0.35}$\\
    \bottomrule\addlinespace
    \toprule
    &\multicolumn{3}{c}{Continuous $x\in [0,\infty)$}\\
    \cmidrule(lr){2-4}
    & Gamma & Half-Cauchy & Half-t \\
    \midrule
    DDPM~\cite{ddpm} & $0.85\pm 0.26$ & $17.05\pm 18.58$ & $ 0.51\pm 0.76$\\
    JUMP (ours) & $\mathbf{0.60\pm 0.27}$ & $\mathbf{3.56\pm 0.76}$ & $\mathbf{0.11\pm 0.02}$\\
    \bottomrule\addlinespace
    \toprule
    &\multicolumn{3}{c}{Continuous $x\in[0,1]$}\\
    \cmidrule(lr){2-4}
    & Uniform & $\mbox{Beta}(0.5, 0.5)$ & $\mbox{Beta}(2, 2)$ \\
    \midrule
    DDPM~\cite{ddpm} & $1.40\pm 0.26$ & $\mathbf{1.24 \pm 0.44}$ & $1.49\pm 0.35$\\
    JUMP (ours) & $\mathbf{1.39\pm 0.42}$ & $1.57\pm 0.17$ & $\mathbf{1.30\pm 0.34}$\\
    \bottomrule
\end{tabular}
\end{adjustbox}\vspace{5mm}
\end{table}

We summarize the results in Table\,\ref{tab:univariate_synthetic} and Figure\,\ref{fig:univariate_synthetic}. 
The results on $\mbox{Gamma}(0.5,0.05)$ and the 3 datasets supported on $[0,1]$ all  suggest that when the data is well behaved in the sense that there is no strong sparsity, skewness, heavy-tailedness, or overdispersion, the proposed JUMP has %
comparable %
performance to DDPM in terms of the Wasserstein-1 metric.
However, the results on the other 5 training datasets, which all clearly exhibit sparsity, skewness, heavy-tailedness, and/or overdispersion, show that the proposed JUMP convincingly outperforms DDPM (and also D3PM on discrete data). %

\subsection{Sparse and Heterogeneous Data}
Bag-of-words~(BOW) and term frequency-inverse document frequency~(TF-IDF) are two common types of document representations. Both are known to be highly sparse and heterogeneous. We prepare two datasets for the document generation task: 20 Newsgroups\footnote{\url{http://qwone.com/~jason/20Newsgroups/}} and NeurIPS\footnote{\url{https://www.kaggle.com/datasets/benhamner/nips-papers}}.

The 20 Newsgroups dataset comprises 18,846 news posts on 20 topics. The NeurIPS  dataset is a collection of 7241 papers published in NeurIPS from 1987 to 2016. After standard preprocessing ($e.g.$, removing stop-words), we represent each document in 20 Newsgroup and NeurIPS  with a count / TF-IDF vector of 8934 and 12038 dimensions, respectively. We set $\lambda$ as 10 for both datasets. For the evaluation of BOW, we consider two summary statistics of the documents, including the sparsity and the length between the true data and generated samples, while for TF-IDF we consider the sparsity and the $\ell_1$-norm. 

We report the Wasserstein-1 distances of the summary statistic distributions in Table~\ref{tab:doc_wdist}. The results show that JUMP has done extremely well in recovering the inherent sparsity of document-type data. In contrast, DDPM has almost completely failed in capturing the sparsity patterns of the TF-IDF documents. Overall, JUMP consistently outperforms DDPM in all scenarios and metrics except for the $\ell_1$-norm of the NeurIPS dataset using TF-IDF representation, where the performance is comparable to that of DDPM.

\begin{table}[t]
    \caption{\textbf{Top:} Wasserstein-1 distances of summary statistics~(sparsity and length) between true samples and generated samples in BOW representations. \textbf{Bottom:} Wasserstein-1 distances of summary statistics~(sparsity and $\ell_1$-norm) between true data and generated samples in TF-IDF representations.\label{tab:doc_wdist}}
    \begin{adjustbox}{width=\columnwidth,center}
    \begin{tabular}{c|cccc}
    \toprule
    &\multicolumn{4}{c}{BOW}\\
    \cmidrule(lr){2-5}
    &\multicolumn{2}{c}{20 Newsgroup} & \multicolumn{2}{c}{NeurIPS}\\
    \cmidrule(lr){2-3}\cmidrule(lr){4-5}
    & Sparsity~(\textperthousand) & Length & Sparsity~(\%) & Length\\
    \midrule
    DDPM~\cite{ddpm} & $5.49\pm 0.04$ & $81.80\pm 10.2$ & $3.46\pm 0.38$ & $191.86\pm 98.4$\\
    JUMP (ours) & $\mathbf{2.65\pm 0.46}$& $\mathbf{35.43\pm 4.16}$& $\mathbf{2.40\pm 0.29}$& $\mathbf{95.56\pm 29.36}$\\
    \bottomrule\addlinespace
    \toprule
    &\multicolumn{4}{c}{TF-IDF}\\
    \cmidrule(lr){2-5}
    &\multicolumn{2}{c}{20 Newsgroup} & \multicolumn{2}{c}{NeurIPS}\\
    \cmidrule(lr){2-3}\cmidrule(lr){4-5}
    & Sparsity~(\textperthousand) & $\ell_1$-norm & Sparsity~(\%) & $\ell_1$-norm\\
    \midrule
    DDPM~\cite{ddpm} & $693.21\pm 5.45$ & $0.65\pm 0.02$ & $66.59\pm 1.74$ & $\mathbf{0.20 \pm 0.02}$\\
    JUMP (ours) & $\mathbf{3.18\pm 0.06}$& $\mathbf{0.41\pm 0.02}$& $\mathbf{6.23\pm 0.00}$& $0.26\pm 0.00$\\
    \bottomrule
    \end{tabular}
    \end{adjustbox} %
\end{table}

\subsection{Natural Images}

Images are by nature high-dimensional data with 256 ordinal pixel values. Typically, the pixel values in images are neither sparse nor heavy-tailed, and hence as analyzed in Section~\ref{sec:synthetic-1}, we do not expect JUMP to outperform DDPM. Our experiments show that while JUMP currently underperforms DDPM in modeling natural images, using the same UNet architecture and diffusion schedule that have been well-tuned to suit DDPM, it can nevertheless generate realistic-looking natural images and achieve comparable evaluation results on standard metrics, including Fréchet Inception distance~(FID)~\cite{heusel2017gans} and Inception score~(IS)~\cite{salimans2016improved}, to diffusion models that operate on non-integer latent states. We present uncurated randomly-generated images in Figure~\ref{fig:cifar10} and report the FID and IS metrics in Table~\ref{tab:fid_is}.

\section{Limitations and Future Work}
A notable limitation of the learning-to-denoise framework is that it often needs to iterate through the same denoising deep neural network hundreds or even thousands of times to refine its generation, which increases the computational cost of data generation by orders of magnitude compared to VAEs and GANs of similar sizes.
The proposed learning-to-jump framework has the same limitation, as it also requires iterating $T$ times through $f_{\theta}(z_t,t)$ to generate a single output, where $T$ often needs to be sufficiently large, $e.g$, $T=1000$, especially for high-dimensional data %
whose different dimensions exhibit complex dependencies.
For learning to denoise, a variety of methods have been proposed to accelerate the generation, but often at the expense of somewhat compromised generation quality when $T$ is limited to a small number \citep{song2021denoising,luhman2021knowledge,kong2021fast,xiao2022tackling,salimans2022progressive,zheng2023truncated,lu2022dpmsolver}. How these acceleration techniques developed for learning to denoise can be extended for the proposed learning-to-jump framework is a research topic worth further investigation.  

Another limitation, as shown by the results %
in Figure~\ref{fig:univariate_synthetic} and Table~\ref{tab:univariate_synthetic} and the image generation results in Table~\ref{tab:fid_is}, is that for ``normal'' data that are not sparse, skewed, heavy-tailed, or heterogeneous, JUMP often trails behind DDPM, under the same UNet architecture and diffusion schedule that have been tailored for DDPM. How to further improve JUMP for ``normal'' data, such as by developing a customized model architecture and optimizing the diffusion schedule, is worth further investigation. 

\begin{figure}[t]
    \centering
    \includegraphics[width=.95\columnwidth]{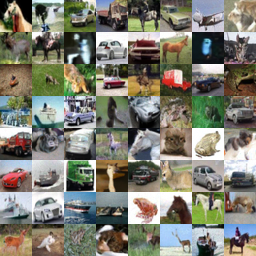}
   \vspace{-3mm}
    \caption{Uncurated randomly-generated samples by JUMP trained  on CIFAR-10. \label{fig:cifar10}}
\end{figure}

\begin{table}[t]
\caption{Comparison of different models on {CIFAR-10}.\label{tab:fid_is}}
\begin{adjustbox}{width=\columnwidth,center}
\begin{tabular}{clcc}
    \toprule
    Latent space & Model & FID~($\downarrow$) & IS~($\uparrow$) \\
    \midrule
    \multirow{2}{*}{Real} & DDPM~\cite{ddpm} & 3.17 & 9.46 \\
    & Bit Diffusion~\cite{chen2023analog} & 3.48 & -\\
    \midrule
    \multirow{2}{*}{Categorical} & D3PM Gauss+Logistic~\cite{austin2021structured} & 7.34 & 8.56\\
    & $\tau$LDR-10~\cite{campbell2022a} & 3.74 & 9.49 \\
    \midrule
    Integer & JUMP (ours) & 4.80 & 9.04 \\
    \bottomrule
\end{tabular}
\end{adjustbox}%
\end{table}

\section{Conclusion}
Iterative-refinement-based deep generative models (DGMs) are typically developed under a learning-to-denoise framework,  which includes diffusion and score-based generative models as representative examples. They have shown impressive performance in capturing the high-dimensional distribution of natural images, but may not perform that well when the data is characterized by sparsity, skewness, heavy-tailedness,  overdispersion, and/or heterogeneity. 
To this end, we propose learning to jump as a novel generative-modeling  framework, which is well suited to model sparse, skewed, heavy-tailed, overdispersed, and/or heterogeneous data, and hence generalizes the applicability of DGMs into broader settings. Experimental results on a diverse set of data of various types demonstrate the unique behaviors and modeling potentials of the learning-to-jump-based DGMs. In particular, for high-dimensional data, we recommend using learning-to-jump in lieu of learning-to-denoise when the training data are highly sparse and heterogeneous.

\section*{Acknowledgments}
The authors acknowledge the support of NSF-IIS 2212418, the Fall 2022 McCombs REG award, the NSF AI Institute for Foundations of Machine Learning (IFML), and Texas Advanced Computing Center (TACC).

\bibliographystyle{icml2023}
\bibliography{reference.bib,ref.bib}
\clearpage
\newpage
\onecolumn
\appendix
\begin{center}
    \Large{\textbf{\\Learning to Jump: Appendix}}
\end{center}

\section{Hyperparameter Settings}

\subsection{Diffusion Schedule} %
We have not conducted an extensive tuning of the diffusion schedule. As a default choice, we utilize the beta-linear schedule introduced in the work of \citet{ddpm}. We set $\beta_1$ to 0.001 by default. The value of $\beta_T$ is selected such that the log-SNR (Signal-to-Noise Ratio) will be approximately $-12$ on average at the end of the forward chain, ensuring that the loss of the last time step $L_T$ is approximately 0.

\subsection{Scaling Parameter} %
Intuitively speaking, the scaling parameter $\lambda$ controls how close the initial latent count distribution and original data distribution are. Specifically, the larger $\lambda$ is, the more precise the transform $f: z_0\mapsto z_0/\lambda$ will be to recover the $x_0$ in distribution. 
Based on our practical observations, we have found that starting with values of $10$ or $100$ for the noise schedule parameter is often suitable for most datasets, excluding image data. These values have shown promising performance as initial choices in our experiments. However, it is important to note that for image datasets, larger values often yield better results, and further experimentation and tuning are recommended to determine the optimal scaling parameter for a specific image dataset. Indeed, we find out that the Fréchet Inception Distance~(FID)~\cite{heusel2017gans} is highly sensitive to noise, even when the noise becomes imperceptible to humans. Figure~\ref{fig:fid-vs-lambda} illustrates the relationship between the scaling parameter $\lambda$ and the corresponding FID between the data distribution of CIFAR-10 and the reconstructed distribution obtained from the initial latent counts, $i.e.$, Poisson-randomized data.

\begin{figure}[h]
    \centering
    \includegraphics[width=.7\linewidth]{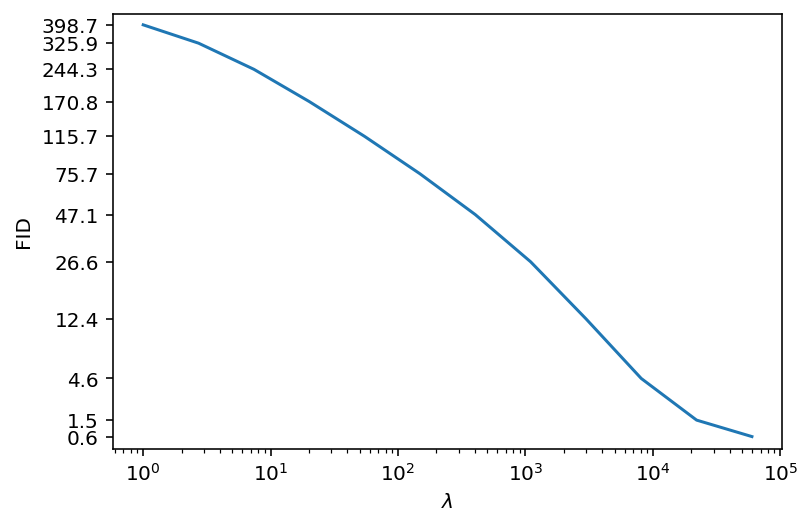}
    \vspace{-3mm}
    \caption{Relationship between FID of the initial latent count distribution and the scaling parameter $\lambda$ on CIFAR-10}
    \label{fig:fid-vs-lambda}
\end{figure}

\subsection{Training}

For all the univariate and document-type datasets, our models are trained for 600 epochs using the Adam optimizer~\cite{adam}. We use a fixed learning rate of 0.001 and the default values for the parameters $\beta_1=0.9$ and $\beta_2=0.999$.

In the case of CIFAR-10 image generation, we utilize the AdamW optimizer~\cite{adamw} with a learning rate of 0.0002 and a weight decay of 0.001. The JUMP model is trained for 3600 epochs. Unlike DDPM, we do not employ any learning rate warmup schedule in our training process.

\section{Model Architectures}

In the case of univariate and document datasets, we employ a Multi-Layer Perceptron (MLP) model architecture composed of three blocks. Each block consists of two fully-connected layers and one time embedding projection layer. Intermediate layers use Leaky-ReLU activation along with layer normalization \cite{ba2016layer}. For CIFAR-10 image generation, we use a UNet model architecture similar to the one used by \citet{nichol2021improved}. Our UNet model has three stages through downsampling and upsampling, which correspond to spatial dimensions of $32\times 32$, $16\times16$, and $8\times 8$. Each stage of the model consists of 3 residual blocks with 128 hidden channels followed by a self-attention layer except for the first stage. In addition, we use a dropout rate of $0.2$ for extra regularization.

\section{Binomial JUMP for Count Data}\label{sec:binomial}

We also consider a variant of the proposed JUMP model when the original data are already counts and hence Poisson randomization may not be necessary. Specifically, we remove the Poisson randomization step used by default in Poisson JUMP, resulting in a JUMP variant where the conditional posteriors of the reverse process follow the shifted-binomial distributions. We refer to this variant as binomial JUMP. The definition of the forward process of binomial JUMP can be expressed as
\ba{
q(z_{1:T}\given x_0)&=\prod_{t=1}^T q(z_{t}\given z_{t-1}),\\
q(z_{t}\given z_{t-1})&=\mbox{Binomial}\lp z_t;z_{t-1}, \frac{\alpha_t}{\alpha_{t-1}}\rp,
}
where $z_0=x_0$, $1=\alpha_0>\alpha_1>\ldots>\alpha_T$. It immediately follows that the marginal distribution of $z_t$ given $x_0$ is binomial: 
$$
q(z_t\given x_0)=\mbox{Binomial}(z_t;x_{0}, \alpha_t).
$$ 
If we know $z_t$ and $x_0$, then we have $z_t\le z_{t-1}\le x_0$ almost surely. 
Thus, with Bayes' rule, we have 
\ba{
q(z_{t-1}=m\given z_t,x_0) & \propto \mbox{Binomial}(m;x_0,\alpha_{t-1}) \mbox{Binomial}\left(z_t;m,\frac{\alpha_t}{\alpha_{t-1}}\right)\\
&\propto \frac{x_0!}{m! (x_0-m)!} \alpha_{t-1}^m (1-\alpha_{t-1})^{x_0-m} \frac{m!}{z_t! (m-z_t)!}\lp\frac{\alpha_t}{\alpha_{t-1}}\rp ^{z_t}\lp 1-\frac{\alpha_t}{\alpha_{t-1}}\rp^{m-z_t}\\
&\propto \frac{1}{(x_0-m)!(m-z_t)!}\left(\frac{\alpha_{t-1}-\alpha_t}{1-\alpha_{t-1}}\right)^m\\
&\propto \frac{(x_0-z_t)!}{(x_0-m)!(m-z_t)!}\left(\frac{\alpha_{t-1}-\alpha_t}{1-\alpha_{t}}\right)^{m-z_t}
\left(1-\frac{\alpha_{t-1}-\alpha_t}{1-\alpha_{t}}\right)^{x_0-m}
\\
&= \mbox{Binomial}\lp m-z_t; x_0-z_t, \frac{\alpha_{t-1}-\alpha_t}{1-\alpha_t}\rp, \label{eq:shifted-binomial}
}
where $z_t\le m \le x_0$.
Therefore, 
the conditional posterior is a shifted-Binomial distribution as  
\ba{
q(z_{t-1}\given z_t, x_0)=\mbox{Shifted-Binomial}_{z_t}\lp x_0-z_t, p_t\rp,~~p_t = \frac{\alpha_{t-1}-\alpha_t}{1-\alpha_t}.\label{eq:shifted-bino}
}
However, in this case, we cannot simply let $p_\theta(z_{t-1}\given z_t)=q(z_{t-1}\given z_t, \hat{x}_0 = f_{\theta}(z_t,t))$. This is because, in order for the KL divergence from the approximated conditional posterior  to the true one to be well defined, we will need to ensure that both $f_{\theta}(z_t,t)$ is a count and $f_{\theta}(z_t,t)\ge x_0$,  which are difficult to realize in practice using a non-linear function defined by the deep neural network-based $f_{\theta}$.
To address this issue, noticing that a binomial distribution $x\sim \mbox{Binomial}(n,p)$ can often be well approximated by a Poisson distribution $x\sim\mbox{Pois}(np)$ when $n$ is large and $np$ is small, we propose to approximate the shifted-binomial distribution in \eqref{eq:shifted-bino} with a shifted-Poisson distribution as 
$$
\hat{q}(z_{t-1}\given z_t,x_0)=\mbox{Shifted-Pois}_{z_t}\lp z_{t-1};p_t(x_0-z_t)\rp ,
$$
and define a Markovian reverse process as $p_\theta(z_{0:T-1}\given z_T=0)=\prod_{t=1}^Tp_\theta(z_{t-1}\given z_t)$, where
\ba{
p_{\theta}(z_{t-1}\given z_t) &= \sum_{n_t}\mbox{Shifted-Binomial}_{z_t}\lp z_{t-1}; n_t, p_t  %
\rp  \mbox{Pois}(n_t;\max(f_{\theta}(z_t,t)-z_t,0))\\ 
&= \mbox{Shifted-Pois}_{z_t}\lp z_{t-1}; p_t \max(f_{\theta}(z_t,t) - z_t,0)\rp.}
Similar to the derivation of Equation~\eqref{eq:neg-elbo-loss}, the approximated negative ELBO loss of binomial jump can be expressed as follows:

\ba{
L=-\bbE_{x_0}\bbE_{q(z_{1:T}\given x_0)}\left[\ln\frac{p_\theta(z_{1:T}, x_0)}{q(z_{1:T}\given x_0)}\right] = \bbE_{x_0}\left[L_0+\sum_{t=2}^TL_{t-1}+L_T\right]
}
where
\ba{
L_0&=\bbE_{q(z_1\mid x_0)}\left[-\ln p_\theta(x_0\mid z_1)\right]\\
L_{t-1}&=\bbE_{q(z_t\mid x_0)}\left[D_{\varphi}\lp p_t\lp x_0 - z_t\rp, p_t\max( f_\theta(z_t, t) - z_t,0)\rp\right], ~\text{ for } t=2,\ldots,T \label{eq:D_bino}\\
L_T&=\kl(q(z_T\mid x_0)\divbar p(z_T))
}
where $D_{\varphi}(\cdotv,\cdotv)$ denotes the relative entropy defined in \eqref{eq:RE}. More specifically, ignoring $p_t$ in \eqref{eq:D_bino}, we have
$$
D_{\varphi}\lp x_0 - z_t, \max( f_\theta(z_t, t) - z_t,0)\rp
=
(x_0 - z_t)\ln \frac{x_0 - z_t}{\max( f_\theta(z_t, t) - z_t,0)} - [(x_0-z_t)-\max( f_\theta(z_t, t) - z_t,0)].
$$
We observe that the relative entropy-based loss function of the binomial JUMP, as illustrated above, shares a close connection with the loss function of the Poisson JUMP, as shown in  \eqref{eq:loss}, with a clear distinction: In the binomial JUMP, we have $z_t\sim \mbox{Binomial}(x_0, \alpha_t)$, resulting in $z_t\leq x_0$, whereas in the Poisson JUMP, we have $z_t\sim \mbox{Pois}(\lambda \alpha_t x_0)$, allowing $z_t$ to potentially exceed $x_0$.

\end{document}